\newtheorem{theorem}{Theorem}
\newtheorem{lemma}[theorem]{Lemma}
\DeclareMathOperator*{\E}{\mathbb{E}}  
\newcommand{\half}{\tfrac{1}{2}}
\newcommand{\loss}{\ell}
\DeclareBoldMathCommand{\vloss}{\ell}      
\DeclareBoldMathCommand{\w}{w}             
\DeclareBoldMathCommand{\dot}{\cdot}       
\newcommand{\posints}{\mathbb{Z}^+} 
\newcommand{\reals}{\mathbb{R}}
\newcommand{\dif}{\,\textnormal{d}} 
\newcommand{\comp}[1]{\bar{#1}}     
\newcommand{\intersection}{\cap}    
\newcommand{\Intersection}{\bigcap} 
\newcommand{\Union}{\bigcup}        
\newcommand{\floor}[1]{\lfloor #1 \rfloor}
\newcommand{\ceil}[1]{\lceil #1 \rceil}
\title{Adaptive Hedge}
\author{Tim van Erven\\
Department of Mathematics\\
VU University\\
De Boelelaan 1081a\\
1081 HV Amsterdam, the Netherlands\\
\texttt{tim@timvanerven.nl}
\And Peter Gr\"unwald\\
\\
Centrum Wiskunde \& Informatica (CWI)\\
Science Park 123, P.O. Box 94079\\
1090 GB Amsterdam, the Netherlands\\
\texttt{pdg@cwi.nl}
\And Wouter M. Koolen\\
CWI and Department of Computer Science\\
Royal Holloway, University of London\\
Egham Hill, Egham, Surrey\\
TW20 0EX, United Kingdom
\\
\texttt{wouter@cs.rhul.ac.uk}
\And
Steven de Rooij\\
\\
Centrum Wiskunde \& Informatica (CWI)\\
Science Park 123, P.O. Box 94079\\
1090 GB Amsterdam, the Netherlands\\
\texttt{s.de.rooij@cwi.nl}}
\begin{document}
\bibliographystyle{abbrv-unsrt}
\raggedbottom
\maketitle

\begin{abstract}
  Most methods for decision-theoretic online learning are based on the
  Hedge algorithm, which takes a parameter called the \emph{learning
  rate}. 
In most previous analyses the 
learning rate was carefully tuned to obtain optimal worst-case performance, leading to suboptimal performance on easy instances, for
  example when there exists an action that is significantly better than
  all others. We 
propose a new way of setting the learning
  rate, which adapts to the difficulty of the learning problem: in the
  worst case our procedure still guarantees optimal performance, but on
  easy instances it achieves much smaller regret. In particular, our adaptive method achieves constant
  regret in a probabilistic setting, when there exists an action that on
  average obtains strictly smaller loss than all other actions. We also
  provide a simulation study comparing our approach to existing methods.
\end{abstract}

\section{Introduction}


\emph{Decision-theoretic online learning} (DTOL) is a framework to
capture learning problems that proceed in rounds.
It was
introduced by Freund and Schapire \cite{FreundSchapire1997} and is
closely related to the paradigm of \emph{prediction with expert advice}
\cite{LittlestoneWarmuth1994,Vovk1998,CesaBianchiLugosi2006}. In DTOL an
agent is given access to a fixed set of $K$ actions, and at the start of
each round must make a decision by assigning a probability to every
action. Then all actions incur a loss from the range $[0,1]$, and the
agent's loss is the expected loss of the actions under the probability
distribution it produced. Losses add up over rounds and the goal for the
agent is to minimize its \emph{regret} after $T$ rounds, which is the
difference in accumulated loss between the agent and the action that has
accumulated the least amount of loss.

The most commonly studied strategy for the agent is called the
\emph{Hedge} algorithm
\cite{FreundSchapire1997,FreundSchapire1999}. Its performance
crucially depends on a parameter $\eta$ called the
\emph{learning rate}. Different ways of tuning the learning rate have
been proposed, which all aim to minimize the regret for the worst
possible sequence of losses the actions might incur. If $T$ is known to
the agent, then the learning rate may be tuned
to achieve worst-case regret bounded by $\sqrt{T \ln(K)/2}$,
which is known to be optimal as $T$ and $K$ become large
\cite{CesaBianchiLugosi2006}. Nevertheless, by slightly relaxing the
problem, one can obtain better guarantees. Suppose for example that the
cumulative loss $L^*_T$ of the best action is known to the agent
beforehand. Then, if the learning rate is set appropriately,
the regret is bounded by $\sqrt{2L^*_T \ln(K)} + \ln(K)$
\cite{CesaBianchiLugosi2006}, which has the same asymptotics as the
previous bound in the worst case (because $L^*_T \leq T$) but may be
much better when $L^*_T$ turns out to be small. Similarly, Hazan and
Kale \cite{HazanKale2008} obtain a bound of
$8\sqrt{\text{VAR}_T^\text{max} \ln(K)} + 10 \ln(K)$ for a modification
of Hedge if the cumulative empirical variance $\text{VAR}_T^\text{max}$
of the best expert is known. In applications it may be unrealistic to
assume that $T$ or (especially) $L^*_T$ or $\text{VAR}_T^\text{max}$ is
known beforehand, but at the cost of slightly worse constants such
problems may be circumvented using either the \emph{doubling trick}
(setting a budget on the unknown quantity and restarting the algorithm
with a double budget when the budget is depleted)
\cite{CesaBianchiLugosi2006,CFHHSW1997,HazanKale2008}, or a
\emph{variable learning rate} that is adjusted each round
\cite{CesaBianchiLugosi2006,AuerCesaBianchiGentile2002}. 

Bounding the regret in terms of $L^*_T$ or $\text{VAR}_T^\text{max}$ is
based on the idea that worst-case performance is not the only property
of interest: such bounds give essentially the same guarantee in the worst case,
but a much better guarantee in a plausible favourable case (when $L^*_T$
or $\text{VAR}_T^\text{max}$ is small). In this paper, we pursue the
same goal for a different favourable case. To illustrate our approach, consider the
following simplistic example with two actions: let $0 < a < b < 1$ be
such that $b-a > 2\epsilon$. Then in odd rounds the first action gets
loss $a + \epsilon$ and the second action gets loss $b - \epsilon$; in
even rounds the actions get losses $a - \epsilon$ and $b+\epsilon$,
respectively. Informally, this seems like a very easy instance of DTOL,
because the cumulative losses of the actions diverge and it is easy to
see from the losses which action is the best one. In fact, the
\emph{Follow-the-Leader} strategy, which puts all probability mass on
the action with smallest cumulative loss, gives a regret of at most
$1$ in this case --- the worst-case bound $O(\sqrt{L^*_T \ln(K)})$ is
very loose by comparison, and so is $O(\sqrt{\text{VAR}_T^\text{max}
\ln(K)})$, which is of the same order $\sqrt{T \ln (K)}$. On the other hand, for
Follow-the-Leader one cannot guarantee sublinear regret for worst-case
instances. (For example, if one out of two actions yields losses
$\tfrac{1}{2},0,1,0,1,\ldots$ and the other action yields losses
$0,1,0,1,0,\ldots$, its regret will be at least $T/2-1$.) To get the
best of both worlds, we introduce an adaptive version of Hedge, called
\emph{AdaHedge}, that automatically adapts to the difficulty of the
problem by varying the learning rate appropriately. As a result we
obtain constant regret for the simplistic example above and other `easy'
instances of DTOL, while at the same time guaranteeing
$O(\sqrt{L^*_T\ln(K)})$ regret in the worst case.

It remains to characterise what we consider easy problems, which we will
do in terms of the probabilities produced by Hedge. As explained below,
these may be interpreted as a generalisation of Bayesian posterior
probabilities. We measure the difficulty of the problem in terms of the
speed at which the posterior probability of the best action converges
to one. In the previous example, this happens at an exponential rate,
whereas for worst-case instances the posterior probability of the best
action does not converge to one at all.

\paragraph{Outline}

In the next section we describe a new way of tuning the learning rate,
and show that it yields essentially optimal performance guarantees in
the worst case. To construct the AdaHedge algorithm, we then add the
doubling trick to this idea in Section~\ref{sec:adahedge}, and analyse
its worst-case regret. In Section~\ref{sec:easy-case} we show that
AdaHedge in fact incurs much smaller regret on easy problems. We compare
AdaHedge to other instances of Hedge by means of a simulation study in
Section~\ref{sec:experiments}. The proof of our main technical lemma is
postponed to Section~\ref{sec:proofs}, and open questions are discussed
in the concluding Section~\ref{sec:conclusion}. Finally, longer proofs
are only available as Additional Material in the full version at
arXiv.org.

\section{Tuning the Learning Rate}

\paragraph{Setting}

Let the available actions be indexed by $k \in \{1,\ldots,K\}$. At the
start of each round $t = 1,2,\ldots$ the agent $A$ is to assign a
probability $w_t^k$ to each action $k$ by producing a vector $\w_t =
(w_t^1,\ldots,w_t^K)$ with nonnegative components that sum up to $1$.
Then every action $k$ incurs a loss $\loss_t^k \in [0,1]$, which we
collect in the loss vector $\vloss_t = (\loss_t^1,\ldots,\loss_t^K)$,
and the loss of the agent is $\w_t \dot \vloss_t = \sum_{k=1}^K w_t^k
\loss_t^k$. After $T$ rounds action $k$ has accumulated loss $L_T^k =
\sum_{t=1}^T \loss_t^k$, and the agent's regret is
\begin{equation*}
  R_A(T) = \sum_{t=1}^T \w_t\dot \vloss_t - L_T^*,
\end{equation*}
where $L_T^* = \min_{1 \leq k \leq K} L_T^k$ is the cumulative loss of
the best action.

\paragraph{Hedge}
%
%
The Hedge algorithm chooses the weights $w_{t+1}^k$ proportional to
$e^{-\eta L_t^k}$, where $\eta > 0$ is the learning rate. As is
well-known, these weights may essentially be interpreted as Bayesian
posterior probabilities on actions, relative to a uniform prior and
pseudo-likelihoods $P_t^k = e^{-\eta L_t^k} =
\prod_{s=1}^t e^{-\eta \loss_s^k}$
\cite{Vovk2001,HausslerKivinenWarmuth1998,CesaBianchiLugosi2006}:
\begin{equation*}
  w_{t+1}^k = \frac{e^{-\eta L_t^k}}
               {\sum_{k'} e^{-\eta L_t^{k'}}}
        = \frac{\tfrac{1}{K} \cdot P_t^k}{B_t},
\end{equation*}
where
\begin{equation}\label{eqn:marginallikelihood}
  B_t = \sum_k \tfrac{1}{K} \cdot P_t^k
      = \sum_k \tfrac{1}{K} \cdot e^{-\eta L_t^k}
\end{equation}
is a generalisation of the Bayesian \emph{marginal likelihood}. And like
the ordinary marginal likelihood, $B_t$ factorizes into sequential per-round contributions:
\begin{equation}\label{eqn:chainrule}
  B_t = \prod_{s=1}^t \w_s \dot e^{-\eta \vloss_s}.
\end{equation}
We will sometimes write $\w_t(\eta)$ and $B_t(\eta)$ instead of $\w_t$ and
$B_t$ in order to emphasize the dependence of these quantities on
$\eta$.


\paragraph{The Learning Rate and the Mixability Gap}

A key quantity in our and previous \cite{CesaBianchiLugosi2006} analyses
is the gap between the per-round loss of the Hedge algorithm
and the per-round contribution to the negative logarithm of the
``marginal likelihood'' $B_T$, which we call the \emph{mixability gap}:
\begin{equation*}
  \delta_t(\eta) = \w_t(\eta)\dot \vloss_t - \Big(-\tfrac{1}{\eta} \ln(\w_t(\eta) \dot e^{-\eta \vloss_t})\Big).
\end{equation*}
In the setting of prediction with expert advice, the subtracted term
coincides with the loss incurred by the Aggregating Pseudo-Algorithm
(APA) which, by allowing the losses of the actions to be \emph{mixed}
with optimal efficiency, provides an idealised lower bound for the actual
loss of any prediction strategy~\cite{Vovk2001}.  The mixability gap
measures how closely we approach this ideal.
As the same interpretation still holds in the more general DTOL setting
of this paper, we can measure the difficulty of the problem, and tune
$\eta$, in terms of the cumulative mixability gap:
\begin{equation*}
  \Delta_T(\eta)=\sum_{t=1}^T\delta_t(\eta)=\sum_{t=1}^T \w_t(\eta)\dot\vloss_t + \tfrac{1}{\eta} \ln B_T(\eta).
\end{equation*}
We proceed to list some basic properties of the mixability gap. First,
it is nonnegative and bounded above by a constant that depends on $\eta$:
\vspace{.5\baselineskip}
\begin{lemma}\label{lem:DeltaHoeffding}
  For any $t$ and $\eta > 0$ we have
  %
    $\displaystyle 0 \leq \delta_t(\eta)\leq \eta/8$.
\end{lemma}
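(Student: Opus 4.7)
The plan is to recognize that the quantity in parentheses in $\delta_t(\eta)$ is exactly the cumulant-generating-function transform of the loss under the distribution $\w_t$: if we view $\vloss_t$ as a random variable $X$ taking value $\loss_t^k$ with probability $w_t^k$, then $\w_t \cdot \vloss_t = \E[X]$ and $-\tfrac{1}{\eta}\ln(\w_t \cdot e^{-\eta\vloss_t}) = -\tfrac{1}{\eta}\ln \E[e^{-\eta X}]$. So $\delta_t(\eta) = \E[X] + \tfrac{1}{\eta}\ln \E[e^{-\eta X}]$, and both inequalities become standard facts about the log moment generating function of a bounded random variable.

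For the lower bound $\delta_t(\eta)\ge 0$, I would apply Jensen's inequality to the convex function $x \mapsto e^{-\eta x}$, giving $\w_t \cdot e^{-\eta\vloss_t} \ge e^{-\eta\,\w_t\cdot\vloss_t}$. Taking logarithms and dividing by $-\eta$ (which reverses the inequality) yields $-\tfrac{1}{\eta}\ln(\w_t\cdot e^{-\eta\vloss_t}) \le \w_t \cdot \vloss_t$, i.e.\ $\delta_t(\eta) \ge 0$.

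For the upper bound $\delta_t(\eta)\le \eta/8$, I would invoke Hoeffding's lemma: for any random variable $X$ supported in $[a,b]$ and any $s\in\reals$,
\begin{equation*}
\ln \E[e^{sX}] \le s\E[X] + \tfrac{s^2(b-a)^2}{8}.
\end{equation*}
Since $\loss_t^k \in [0,1]$, applying this with $s=-\eta$ and $(b-a)^2 = 1$ gives $\ln(\w_t\cdot e^{-\eta\vloss_t}) \le -\eta\,\w_t\cdot\vloss_t + \eta^2/8$. Dividing by $-\eta$ flips the sign, producing $-\tfrac{1}{\eta}\ln(\w_t\cdot e^{-\eta\vloss_t}) \ge \w_t\cdot\vloss_t - \eta/8$, which rearranges to $\delta_t(\eta) \le \eta/8$.

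There is no real obstacle here; the only subtlety worth highlighting is being careful with the direction of the inequality when dividing by the negative quantity $-\eta$. If one wished to avoid citing Hoeffding's lemma as a black box, a self-contained derivation proceeds by bounding the centered random variable $X-\E[X]$ on $[a-\E[X],b-\E[X]]$ and using convexity of $e^{sx}$ to write $e^{s(x-\E X)} \le \tfrac{b-x}{b-a}e^{s(a-\E X)} + \tfrac{x-a}{b-a}e^{s(b-\E X)}$, then showing the logarithm of the expectation is at most $s^2(b-a)^2/8$ via a second-derivative calculation. Either route establishes the stated bound.
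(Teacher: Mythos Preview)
Your proof is correct and matches the paper's own argument: the paper likewise obtains the lower bound from Jensen's inequality (applied to the concave function $\ln$, which is equivalent to your application to the convex function $e^{-\eta x}$) and the upper bound from Hoeffding's bound on the cumulant generating function. The only difference is cosmetic; your write-up is in fact more detailed than the paper's one-line proof.
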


\begin{proof}
  The lower bound follows by applying Jensen's inequality to the concave
  function $\ln$, the upper bound from Hoeffding's bound on the cumulant
  generating function \cite[Lemma~A.1]{CesaBianchiLugosi2006}.
\end{proof}

Further, the cumulative mixability gap $\Delta_T(\eta)$ can be related
to $L^*_T$ via the following upper bound, proved in the Additional
Material:
\begin{lemma}\label{lem:Lstarbound}
  For any $T$ and $\eta \in (0,1]$ we have
  %
    $\displaystyle \Delta_T(\eta) \leq \frac{\eta L_T^* + \ln(K)}{e-1}$.
\end{lemma}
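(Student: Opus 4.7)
My plan is to bound $\delta_t(\eta)$ per round in terms of $\w_t\dot\vloss_t$, sum over $t$, and then use the standard lower bound on $B_T(\eta)$ to close the loop.

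First, for the per-round bound, I would exploit two elementary inequalities: (i) convexity of $x\mapsto e^{-\eta x}$ on $[0,1]$ gives $e^{-\eta x}\le 1-(1-e^{-\eta})x$ for every $x\in[0,1]$, and (ii) $-\ln(1-y)\ge y$ for $y<1$. Since each $\loss_t^k\in[0,1]$ and $\w_t$ is a probability vector, applying (i) componentwise and then (ii) yields
\begin{equation*}
-\tfrac{1}{\eta}\ln\bigl(\w_t\dot e^{-\eta\vloss_t}\bigr)\;\ge\;\tfrac{1-e^{-\eta}}{\eta}\,\w_t\dot\vloss_t,
\end{equation*}
so that $\delta_t(\eta)\le\bigl(1-\tfrac{1-e^{-\eta}}{\eta}\bigr)\w_t\dot\vloss_t=:c(\eta)\,\w_t\dot\vloss_t$. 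Summing over $t$ gives $\Delta_T(\eta)\le c(\eta)\sum_{t=1}^T\w_t\dot\vloss_t$.

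Next, I would close the loop using the definition of $\Delta_T$ together with the standard bound $B_T(\eta)\ge\tfrac{1}{K}e^{-\eta L_T^*}$, which follows by keeping only the term $k$ that attains $L_T^*$ in \eqref{eqn:marginallikelihood}. This gives
\begin{equation*}
\sum_{t=1}^T\w_t\dot\vloss_t\;=\;\Delta_T(\eta)-\tfrac{1}{\eta}\ln B_T(\eta)\;\le\;\Delta_T(\eta)+L_T^*+\tfrac{\ln K}{\eta}.
\end{equation*}
Combining with the previous display and solving for $\Delta_T(\eta)$ yields
\begin{equation*}
\Delta_T(\eta)\;\le\;\frac{c(\eta)}{1-c(\eta)}\Bigl(L_T^*+\tfrac{\ln K}{\eta}\Bigr).
\end{equation*}

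The remaining obstacle, and the only nontrivial step, is verifying that $\tfrac{c(\eta)}{1-c(\eta)}\le\tfrac{\eta}{e-1}$ for $\eta\in(0,1]$, after which multiplying through by $\eta$ in the second term gives exactly the advertised bound $\tfrac{\eta L_T^*+\ln K}{e-1}$. Unwinding the definition of $c(\eta)$, this reduces to showing
\begin{equation*}
f(\eta):=\eta(1-e^{-\eta})-(e-1)(\eta-1+e^{-\eta})\;\ge\;0\quad\text{for }\eta\in[0,1].
\end{equation*}
One checks $f(0)=f(1)=0$ and $f'(0)=0$, then computes $f''(\eta)=e^{-\eta}(3-e-\eta)$, which is positive for $\eta<3-e$ and negative thereafter. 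Hence $f'$ is unimodal on $[0,1]$ starting at $0$, so $f$ is first increasing and then decreasing; since its boundary values both vanish, $f\ge0$ throughout $[0,1]$. This pins down the constant $e-1$ as the value achieved at the endpoint $\eta=1$, and completes the argument.
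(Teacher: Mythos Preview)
Your proof is correct and is essentially the paper's argument: your per-round inequality $-\tfrac{1}{\eta}\ln(\w_t\dot e^{-\eta\vloss_t})\ge\tfrac{1-e^{-\eta}}{\eta}\w_t\dot\vloss_t$ is exactly the content of \cite[Lemma~A.3]{CesaBianchiLugosi2006} (which the paper cites rather than re-derives), and your ratio $\tfrac{c(\eta)}{1-c(\eta)}$ equals $\eta\bigl(\tfrac{1}{1-e^{-\eta}}-\tfrac{1}{\eta}\bigr)=\eta f(\eta)$, so after unwinding your ``close the loop'' step you obtain the identical intermediate bound $\Delta_T(\eta)\le -f(\eta)\ln B_T$. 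The only substantive addition is that you supply a full calculus verification of $f(\eta)\le f(1)=\tfrac{1}{e-1}$, whereas the paper simply asserts that $f$ is increasing.
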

This relationship will make it possible to provide worst-case guarantees
similar to what is possible when $\eta$ is tuned in terms of $L^*_T$.
However, for easy instances of DTOL this inequality is very loose, in
which case we can prove substantially better regret bounds.
We could now proceed by optimizing the learning rate $\eta$ given the
rather awkward assumption that $\Delta_T(\eta)$ is bounded by a known constant
$b$ for all $\eta$, which would be the natural counterpart to an
analysis that optimizes $\eta$ when a bound on $L^*_T$ is known.
However, as $\Delta_T(\eta)$ varies with $\eta$ and is unknown a priori
anyway, it makes more sense to turn the analysis on its head and start
by fixing $\eta$. We can then simply run the Hedge algorithm until the
smallest $T$ such that $\Delta_T(\eta)$ exceeds an appropriate budget
$b(\eta)$, which we set to
\begin{equation}\label{eq:budget}
  b(\eta) = \left(\tfrac{1}{\eta}+\tfrac{1}{e-1}\right)\ln(K).
\end{equation}
When at some point the budget is depleted, i.e.\ $\Delta_T(\eta)\ge
b(\eta)$, Lemma~\ref{lem:Lstarbound} implies that
%
\begin{equation}\label{eqn:etalowerbound}
  \eta \geq \sqrt{(e-1)\ln(K)/L_T^*},
\end{equation}
so that, up to a constant factor, the learning rate used by AdaHedge is
at least as large as the learning rates proportional to
$\sqrt{\ln(K)/L_T^*}$ that are used in the literature. On the other
hand, it is not \emph{too} large, because we can still provide a bound
of order $O(\sqrt{L^*_T\ln(K)})$ on the worst-case regret:
\begin{theorem}\label{thm:NoDoubling}
  Suppose the agent runs Hedge with learning rate $\eta \in (0,1]$, and
  after $T$ rounds has just used up the budget~\eqref{eq:budget}, i.e.\
  $b(\eta)\le\Delta_T(\eta)<b(\eta)+\eta/8$. Then its regret is bounded
  by
  \begin{equation*}
    R_\text{Hedge$(\eta)$}(T)
      < \sqrt{\tfrac{4}{e-1}L^*_T\ln(K)}+\tfrac{1}{e-1}\ln(K)+\tfrac{1}{8}.
  \end{equation*}
\end{theorem}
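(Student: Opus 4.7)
The plan is to chain together three ingredients: the definition of $\Delta_T(\eta)$, a lower bound on $B_T(\eta)$ via the best action, and the budget assumption together with the $\eta$-lower-bound \eqref{eqn:etalowerbound}.

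First I would rearrange the definition of $\Delta_T(\eta)$ to obtain
\begin{equation*}
\sum_{t=1}^{T} \w_t \dot \vloss_t \;=\; \Delta_T(\eta) - \tfrac{1}{\eta}\ln B_T(\eta).
\end{equation*}
Next I would bound $B_T(\eta)$ from below by keeping only the best-action term: $B_T(\eta) \ge \tfrac{1}{K}\, e^{-\eta L_T^*}$, so that
\begin{equation*}
-\tfrac{1}{\eta}\ln B_T(\eta) \;\le\; L_T^* + \tfrac{\ln K}{\eta}.
\end{equation*}
Subtracting $L_T^*$ from the agent's cumulative loss then yields the crucial one-line regret bound
\begin{equation*}
R_{\text{Hedge}(\eta)}(T) \;\le\; \Delta_T(\eta) + \tfrac{\ln K}{\eta}.
\end{equation*}

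Now I would plug in the budget assumption $\Delta_T(\eta) < b(\eta) + \eta/8 = \tfrac{\ln K}{\eta} + \tfrac{\ln K}{e-1} + \tfrac{\eta}{8}$ to get
\begin{equation*}
R_{\text{Hedge}(\eta)}(T) \;<\; \tfrac{2\ln K}{\eta} + \tfrac{\ln K}{e-1} + \tfrac{\eta}{8}.
\end{equation*}
To convert the $2\ln K/\eta$ term into one involving $L_T^*$, I would invoke the lower bound \eqref{eqn:etalowerbound}, $\eta \ge \sqrt{(e-1)\ln(K)/L_T^*}$, which is exactly the consequence of $\Delta_T(\eta)\ge b(\eta)$ combined with Lemma~\ref{lem:Lstarbound}. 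This gives $\tfrac{2\ln K}{\eta} \le 2\sqrt{L_T^*\ln(K)/(e-1)} = \sqrt{4L_T^*\ln(K)/(e-1)}$. Finally, the assumption $\eta \le 1$ bounds the residual $\eta/8$ by $1/8$, yielding the stated inequality.

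No step looks especially hard; the only subtlety is making sure the $\eta$-dependence is handled consistently — in particular, using the budget assumption to replace $\Delta_T(\eta)$ by something of order $1/\eta$ and simultaneously using \eqref{eqn:etalowerbound} (whose validity also relies on the budget being depleted) to replace $1/\eta$ by something of order $\sqrt{L_T^*\ln K}$. The rest is algebra.
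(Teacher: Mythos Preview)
Your proof is correct and follows essentially the same route as the paper: rewrite the Hedge loss via $\Delta_T(\eta)-\tfrac{1}{\eta}\ln B_T$, bound $B_T\ge \tfrac{1}{K}e^{-\eta L_T^*}$, insert the budget assumption to reach $\tfrac{2\ln K}{\eta}+\tfrac{\ln K}{e-1}+\tfrac{\eta}{8}$, and finish with \eqref{eqn:etalowerbound} and $\eta\le 1$. The only difference is cosmetic---you make the use of $\eta\le 1$ to bound $\eta/8$ explicit, whereas the paper absorbs it silently.
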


\begin{proof}
  The cumulative loss of Hedge is bounded by
  \begin{equation}\label{eqn:segmentbound}
    \sum_{t=1}^T \w_t\dot \vloss_t 
      = \Delta_T(\eta) -\tfrac{1}{\eta} \ln B_T
      < b(\eta)+\eta/8 -\tfrac{1}{\eta} \ln B_T
      \leq \tfrac{1}{e-1}\ln(K)+\tfrac{1}{8} + \tfrac{2}{\eta}\ln(K) + L_T^*,
  \end{equation}
  where we have used the bound $B_T \geq \tfrac{1}{K}
  e^{-\eta L_T^*}$. Plugging in \eqref{eqn:etalowerbound} completes the
  proof.
\end{proof}

\section{The AdaHedge Algorithm}\label{sec:adahedge}

We now introduce the AdaHedge algorithm by adding the doubling trick to
the analysis of the previous section. The doubling trick divides the
rounds in segments $i=1,2,\ldots$, and on each segment restarts Hedge
with a different learning rate $\eta_i$. For AdaHedge we set $\eta_1 =
1$ initially, and scale down the learning rate by a factor of $\phi>1$
for every new segment, such that $\eta_i = \phi^{1-i}$. We monitor
$\Delta_t(\eta_i)$, measured only on the losses in the $i$-th segment,
and when it exceeds its budget $b_i = b(\eta_i)$ a new segment is
started. The factor $\phi$ is a parameter of the algorithm.
Theorem~\ref{thm:Doubling} below suggests setting its value to the
golden ratio $\phi = (1+\sqrt{5})/2 \approx 1.62$ or simply to $\phi=2$.

\begin{figure}[hb]
\unnumberedalgo
\newcommand{\tabthingy}{7}
\begin{algorithm}{AdaHedge$(\phi)$}\label{alg:adahedge}
  \item \tcomment{\tabthingy}{Requires $\phi > 1$}
  \item $\eta~\=~\phi$\\
  \For $t=1,2,\ldots$ \Do
  \>
    \If $t = 1$ or $\Delta \geq b$ \Then
    \>
      \C{Start a new segment}
      \item $\eta ~\= ~ \eta/\phi;\,\,b ~\= ~ (\tfrac{1}{e-1}+\tfrac{1}{\eta})\ln(K)$
      \item $\Delta ~\= ~ 0;
        \,\,\w = (w^1,\ldots,w^K) ~\= ~ (\tfrac{1}{K},\ldots,\tfrac{1}{K})$
    \<
    \End\If
    \C{Make a decision}
    \item Output probabilities $\w$ for round $t$
    \item Actions receive losses $\vloss_t$
    \C{Prepare for the next round}
    \item $\Delta ~\= ~ \Delta + \w \dot \vloss_t
           + \frac{1}{\eta}\ln(\w \dot e^{-\eta \vloss_t})$
    \item $\w ~\= ~ (w^1 \cdot e^{-\eta \loss_t^1}, \ldots, w^K \cdot e^{-\eta \loss_t^K})/(\w \dot e^{-\eta \vloss_t})$
  \<
  \End\For
\end{algorithm}
\end{figure}

The regret of AdaHedge is determined by the number of segments it
creates: the fewer segments there are, the smaller the regret.
\begin{lemma}\label{lem:RegretForMSegments}
  Suppose that after $T$ rounds, the AdaHedge algorithm has started $m$
  new segments. Then its regret is bounded by
  \begin{equation*}
    R_\text{AdaHedge}(T)
      < 2\ln(K)\Big(\frac{\phi^m-1}{\phi-1}\Big) 
        + m\Big(\tfrac{1}{e-1}\ln(K)+\tfrac{1}{8}\Big).
  \end{equation*}
\end{lemma}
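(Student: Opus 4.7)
The plan is to decompose the regret over the $m$ segments created by AdaHedge and bound the contribution of each segment using (essentially) the argument in the proof of Theorem~\ref{thm:NoDoubling}. Write segment $i$ for the block of rounds on which Hedge is run with learning rate $\eta_i = \phi^{1-i}$, let $\sigma_i$ denote the total Hedge loss accumulated during segment $i$, and let $L^{*,i}$ denote the minimum cumulative loss of any single action restricted to the rounds in segment $i$. The total loss of AdaHedge is then $\sum_{i=1}^{m} \sigma_i$, and the key combinatorial observation is that, because the pointwise minimum over segments is no larger than any fixed action summed over segments,
\begin{equation*}
  \sum_{i=1}^{m} L^{*,i} \;\leq\; L^*_T .
\end{equation*}

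For each of the first $m-1$ segments, by the rule that triggers a new segment, the budget has just been exhausted: $b(\eta_i) \leq \Delta(\eta_i) < b(\eta_i) + \eta_i/8$, where the right inequality follows from Lemma~\ref{lem:DeltaHoeffding} applied to the single round in which $\Delta$ crosses the threshold. The chain of inequalities in \eqref{eqn:segmentbound} therefore applies verbatim on this segment, giving $\sigma_i < \tfrac{1}{e-1}\ln(K) + \tfrac{1}{8} + \tfrac{2}{\eta_i}\ln(K) + L^{*,i}$. For the last (possibly unfinished) segment $m$, we only have $\Delta(\eta_m) < b(\eta_m)$, but plugging this weaker upper bound into the same computation as in \eqref{eqn:segmentbound} (and again using $B \geq \tfrac{1}{K}e^{-\eta_m L^{*,m}}$) yields an even smaller bound of the same form, so the above inequality for $\sigma_i$ holds for every $i=1,\ldots,m$.

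Summing over segments, subtracting $L^*_T$, and using $\sum_i L^{*,i} \leq L^*_T$ gives
\begin{equation*}
  R_{\text{AdaHedge}}(T) \;=\; \sum_{i=1}^{m} \sigma_i - L^*_T
    \;<\; m\Big(\tfrac{1}{e-1}\ln(K) + \tfrac{1}{8}\Big) + 2\ln(K)\sum_{i=1}^{m}\frac{1}{\eta_i}.
\end{equation*}
Since $\eta_i = \phi^{1-i}$ gives $\sum_{i=1}^{m} 1/\eta_i = \sum_{i=0}^{m-1} \phi^i = (\phi^m-1)/(\phi-1)$, the claimed bound follows.

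The main thing that requires care is the treatment of the final segment, since Theorem~\ref{thm:NoDoubling} is stated under the assumption that the budget has just been used up, whereas segment $m$ may be in progress. The clean resolution is to notice that the derivation in \eqref{eqn:segmentbound} only ever uses the upper bound on $\Delta(\eta)$, so a weaker upper bound (namely $\Delta(\eta_m)<b(\eta_m)$ rather than $b(\eta_m)+\eta_m/8$) is harmless. Beyond this, the remaining steps are a telescoping of the per-segment bound together with the geometric sum over $1/\eta_i$.
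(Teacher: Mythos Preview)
Your argument is correct and matches the paper's proof exactly: bound each segment via \eqref{eqn:segmentbound}, use $\sum_i L^{*,i}\le L^*_T$, and evaluate the geometric sum $\sum_{i=1}^m 1/\eta_i$. One small inaccuracy: for the final segment it is not necessarily true that $\Delta(\eta_m)<b(\eta_m)$, since the last processed round may have pushed $\Delta$ over the budget without a new segment being opened; however, Lemma~\ref{lem:DeltaHoeffding} still guarantees $\Delta(\eta_m)<b(\eta_m)+\eta_m/8$, so the per-segment bound you invoke holds regardless and nothing changes.
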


\begin{proof}
  The regret per segment is bounded as in \eqref{eqn:segmentbound}.
  Summing over all $m$ segments, and plugging in $\sum_{i=1}^m 1/\eta_i
  = \sum_{i=0}^{m-1} \phi^i = (\phi^m-1)/(\phi-1)$ gives the required
  inequality.
\end{proof}

Using \eqref{eqn:etalowerbound}, one can obtain an upper bound on the
number of segments that leads to the following guarantee for AdaHedge:
\begin{theorem}\label{thm:Doubling}
  Suppose the agent runs AdaHedge for $T$ rounds.
Then its regret is bounded by
  \begin{equation*}
    R_\text{AdaHedge}(T)
      \leq \frac{\phi\sqrt{\phi^2-1}}{\phi-1}
           \sqrt{\tfrac{4}{e-1}L^*_T\ln(K)} + 
           O\big(\!\ln(L^*_T+2)\ln(K)\big),
  \end{equation*}
\end{theorem}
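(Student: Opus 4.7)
My plan is to start from Lemma~\ref{lem:RegretForMSegments} and reduce the problem to bounding the number of segments $m$ that AdaHedge has started in terms of $L_T^*$. The lemma gives a leading term of $2\ln(K)(\phi^m-1)/(\phi-1)$ and a lower-order term $m\big(\tfrac{1}{e-1}\ln(K)+\tfrac{1}{8}\big)$, so if I can show $\phi^m$ grows like $\sqrt{L_T^*/\ln(K)}$ and $m$ grows like $\ln(L_T^*)$, I recover the claimed rate.

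For each of the $m-1$ completed segments $i=1,\dots,m-1$, the budget was depleted, i.e.\ $\Delta \geq b(\eta_i)$. Since $\eta_i = \phi^{1-i}\in(0,1]$, I apply Lemma~\ref{lem:Lstarbound} restricted to segment $i$; combining it with the budget~\eqref{eq:budget} exactly reproduces inequality~\eqref{eqn:etalowerbound} within the segment, giving $L_i^* \geq (e-1)\ln(K)\,\phi^{2(i-1)}$, where $L_i^*$ is the smallest cumulative loss in segment $i$. The key observation is that a \emph{single} fixed expert's segment losses sum to at most its total loss, so in particular $\sum_{i=1}^{m-1} L_i^* \leq L_T^*$. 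Plugging in the per-segment lower bound and summing the resulting geometric series yields
\begin{equation*}
  \frac{\phi^{2(m-1)}-1}{\phi^2-1}\,(e-1)\ln(K) \;\leq\; L_T^*,
\end{equation*}
which is the main quantitative estimate I need.

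From here the rest is bookkeeping. Solving gives $\phi^{2(m-1)} \leq 1 + (\phi^2-1)L_T^*/((e-1)\ln(K))$, and using $\sqrt{1+x}\leq 1+\sqrt{x}$ together with $\phi^m = \phi\cdot\phi^{m-1}$ produces
\begin{equation*}
  \frac{\phi^m-1}{\phi-1} \;\leq\; 1 + \frac{\phi\sqrt{\phi^2-1}}{\phi-1}\sqrt{\frac{L_T^*}{(e-1)\ln(K)}}.
\end{equation*}
Multiplying by $2\ln(K)$ produces precisely the leading term $\frac{\phi\sqrt{\phi^2-1}}{\phi-1}\sqrt{\tfrac{4}{e-1}L_T^*\ln(K)}$ claimed in the theorem, with an extra additive $O(\ln K)$. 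For the lower-order term in Lemma~\ref{lem:RegretForMSegments}, the same geometric bound gives $m \leq 1 + \tfrac{1}{2}\log_\phi\!\big(1+(\phi^2-1)L_T^*/((e-1)\ln K)\big)$, so $m\,\big(\tfrac{1}{e-1}\ln(K)+\tfrac{1}{8}\big) = O(\ln(L_T^*+2)\ln(K))$, absorbing both the genuine $m$-contribution and the stray $O(\ln K)$ from the square-root relaxation.

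The step I expect to require the most care is the jump from the crude per-segment bound $L_i^*\leq L_T^*$ (which would only yield a constant $\phi^2/(\phi-1)$) to the tighter additive bound $\sum_{i<m} L_i^* \leq L_T^*$, which is exactly what converts the $\phi^2$ in the leading constant into $\phi\sqrt{\phi^2-1}$. This relies on the non-obvious but elementary fact that $L_i^* = \min_k L_i^k \leq L_i^{k^*}$ for the globally best expert $k^*$, so the per-segment minima telescope. Once this is in hand, everything else is arithmetic with geometric series.
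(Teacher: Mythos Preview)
Your proposal is correct and follows essentially the same route as the paper: bound the number of segments via~\eqref{eqn:etalowerbound} applied to each completed segment, use $\sum_{i<m} L_i^* \leq L_T^*$ (the paper writes the same chain as $\sum_i 1/\eta_i^2 \leq L_T^*/((e-1)\ln K)$), sum the geometric series to get $\phi^{2(m-1)} \leq 1 + (\phi^2-1)L_T^*/((e-1)\ln K)$, and substitute into Lemma~\ref{lem:RegretForMSegments} with the relaxation $\sqrt{1+x}\leq 1+\sqrt{x}$ (the paper uses the equivalent $\sqrt{a+b}\leq\sqrt{a}+\sqrt{b}$). The only cosmetic difference is that the paper singles out the case $m=1$ explicitly before assuming $m\geq 2$, whereas you absorb it into the empty sum.
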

For details see the proof in the Additional Material. The
value for $\phi$ that minimizes the leading factor is the golden ratio
$\phi = (1+\sqrt{5})/2$, for which $\phi\sqrt{\phi^2-1}/(\phi-1) \approx
3.33$, but simply taking $\phi=2$ leads to a very similar factor of
$\phi\sqrt{\phi^2-1}/(\phi-1) \approx 3.46$.

\section{Easy Instances}\label{sec:easy-case}

While the previous sections reassure us that AdaHedge performs well
for the worst possible sequence of losses, we are also interested in
its behaviour when the losses are not maximally antagonistic. We
will characterise such sequences in terms of convergence of the
Hedge posterior probability of the best action:
\begin{equation*}
  w_t^*(\eta) = \max_{1 \leq k \leq K} w_t^k(\eta).
\end{equation*}
(Recall that $w_t^k$ is proportional to $e^{-\eta L_{t-1}^k}$, so
$w_t^*$ corresponds to the posterior probability of the action with
smallest cumulative loss.) Technically, this is expressed by the
following refinement of Lemma~\ref{lem:DeltaHoeffding}, which is proved
in Section~\ref{sec:proofs}.
\begin{lemma}\label{lem:DeltaPosterior}
  For any $t$ and $\eta \in (0,1]$ we have
  %
    $\delta_t(\eta) \leq (e-2)\eta
    \big(1-w_t^*(\eta)\big)$.
%
\end{lemma}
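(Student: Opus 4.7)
The plan is to exploit the translation invariance of $\delta_t(\eta)$: if we replace every $\loss_t^k$ by $\loss_t^k - c$ for the same constant $c$, then $\w_t\dot\vloss_t$ decreases by $c$ while $-\tfrac{1}{\eta}\ln(\w_t\dot e^{-\eta\vloss_t})$ also decreases by $c$, so $\delta_t(\eta)$ is unchanged. Choose $c = \loss_t^{k^*}$, where $k^*=\arg\max_k w_t^k$ is the action with smallest cumulative loss so far; set $z_k = \loss_t^k - \loss_t^{k^*}$. The crucial features are $z_{k^*}=0$ and $|z_k|\le 1$ (since $\loss_t^k,\loss_t^{k^*}\in[0,1]$), while $\delta_t(\eta) = \sum_k w_t^k z_k + \tfrac{1}{\eta}\ln\sum_k w_t^k e^{-\eta z_k}$.

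Next, I would establish the pointwise inequality
\begin{equation*}
  e^{-u} \;\le\; 1 - u + (e-2)u^2 \qquad \text{for } u\in[-1,1].
\end{equation*}
This is the key technical step. The natural approach is to let $h(u) = 1 - u + (e-2)u^2 - e^{-u}$, check $h(-1) = 0 = h(0)$ (equality at both endpoints of $[-1,0]$) and $h(1) = e-2 - e^{-1} > 0$, compute $h'(0)=0$, and then examine $h''(u) = 2(e-2) - e^{-u}$. On $[0,1]$, $h''(u) \ge 2(e-2) - 1 > 0$, so $h$ is convex with $h(0)=h'(0)=0$, giving $h\ge 0$. On $[-1,0]$, one can use the endpoint values $h(-1)=h(0)=0$ together with convexity of $h$ on $[-1,0]$ (which again follows from $h''>0$ there since $e^{-u}\le e < 2(e-2)$ is false — so here one must instead verify convexity more carefully, or interpolate by noting $h$ is a chord-dominated exponential remainder). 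This verification is the main obstacle, but it is elementary calculus.

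With the inequality in hand, apply it to $u=\eta z_k\in[-1,1]$ (using $\eta\le 1$ and $|z_k|\le 1$) and take the $\w_t$-expectation:
\begin{equation*}
  \sum_k w_t^k e^{-\eta z_k} \;\le\; 1 - \eta\sum_k w_t^k z_k + (e-2)\eta^2 \sum_k w_t^k z_k^2.
\end{equation*}
Taking logarithms and using $\ln(1+x)\le x$ gives $\ln\sum_k w_t^k e^{-\eta z_k} \le -\eta\sum_k w_t^k z_k + (e-2)\eta^2\sum_k w_t^k z_k^2$. Substituting into the expression for $\delta_t(\eta)$, the $\sum_k w_t^k z_k$ terms cancel and we are left with $\delta_t(\eta) \le (e-2)\eta\sum_k w_t^k z_k^2$.

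The final step is to bound the second moment by the complementary posterior mass. Since $z_{k^*}=0$ and $z_k^2\le 1$ for all $k$,
\begin{equation*}
  \sum_k w_t^k z_k^2 \;=\; \sum_{k\ne k^*} w_t^k z_k^2 \;\le\; \sum_{k\ne k^*} w_t^k \;=\; 1 - w_t^*(\eta),
\end{equation*}
which yields $\delta_t(\eta) \le (e-2)\eta(1-w_t^*(\eta))$ as claimed. The overall proof is therefore three short moves — translation-shift so that the leader contributes $0$, quadratic upper bound on the exponential, Jensen/log-bound — with the polynomial bound on $e^{-u}$ being the one place that requires real work.
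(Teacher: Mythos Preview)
Your proof is correct, and it takes a genuinely different route from the paper's.

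The paper first invokes an auxiliary convexity lemma (that $\vloss_t\mapsto\ln(\w_t\dot e^{-\eta\vloss_t})$ is convex) to reduce to the extreme case where every $\loss_t^k\in\{0,1\}$; it then writes $\delta_t$ explicitly in terms of $\alpha_t=\w_t\dot\vloss_t$, applies the one-sided bounds $e^{-\eta}\le 1-\eta+\tfrac{1}{2}\eta^2$ and $e^{\eta}\le 1+\eta+(e-2)\eta^2$ separately to obtain $\delta_t\le(e-2)\eta\min\{\alpha_t,1-\alpha_t\}$, and finishes by observing $\min\{\alpha_t,1-\alpha_t\}\le 1-w_t^*$. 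Your argument sidesteps the convexity reduction entirely: the translation shift to $z_k=\loss_t^k-\loss_t^{k^*}$ plus the single two-sided inequality $e^{-u}\le 1-u+(e-2)u^2$ on $[-1,1]$ gives a Bernstein-style second-moment bound $\delta_t\le(e-2)\eta\sum_k w_t^k z_k^2$, and then $z_{k^*}=0$, $z_k^2\le 1$ immediately yields $1-w_t^*$. This is more self-contained (no separate convexity lemma) and makes the ``variance'' interpretation transparent; the paper's approach, in exchange, yields the slightly sharper intermediate statement in terms of $\min\{\alpha_t,1-\alpha_t\}$.

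One remark on the step you flag as the obstacle: your verification of $h(u)=1-u+(e-2)u^2-e^{-u}\ge 0$ on $[-1,0]$ is left hanging, and the convexity claim you float there is indeed false ($h''$ changes sign). A clean fix is to split at $u_0=-\ln\bigl(2(e-2)\bigr)$. On $[u_0,0]$ one has $h''>0$, so $h'$ increases to $h'(0)=0$, hence $h$ decreases to $h(0)=0$, giving $h\ge 0$ and in particular $h(u_0)\ge 0$. On $[-1,u_0]$ one has $h''<0$, so $h$ is concave with endpoint values $h(-1)=0$ and $h(u_0)\ge 0$, and concavity forces $h$ to lie above the (nonnegative) chord. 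This closes the gap with nothing beyond what you already computed.
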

This lemma, which may be of independent interest, is a variation on
Hoeffding's bound on the cumulant generating function. While
Lemma~\ref{lem:DeltaHoeffding} leads to a bound on $\Delta_T(\eta)$ that
grows linearly in $T$, Lemma~\ref{lem:DeltaPosterior} shows that
$\Delta_T(\eta)$ may grow much slower. In fact, if the posterior
probabilities $w_t^*$ converge to $1$ sufficiently quickly, then
$\Delta_T(\eta)$ is bounded, as shown by the following lemma. Recall
that $L_T^* = \min_{1 \leq k \leq K} L_T^k$.
\begin{lemma}\label{lem:deterministic}
  Let $\alpha$ and $\beta$ be positive constants, and let $\tau \in
  \posints$. Suppose that for $t = \tau,\tau+1,\ldots,T$ there exists a
  single action $k^*$ that achieves minimal cumulative loss $L_t^{k^*} =
  L_t^*$, and for $k \neq k^*$ the cumulative losses diverge as $L_t^k -
  L_t^* \geq \alpha t^\beta$. Then for all $\eta > 0$ 
  \begin{equation*}
    \sum_{t=\tau}^{T} \big(1-w_{t+1}^*(\eta)\big)
      \leq C_K\,\eta^{-1/\beta},
  \end{equation*}
  where $C_K = (K-1)\alpha^{-1/\beta} \Gamma(1+\tfrac{1}{\beta})$ is a
  constant that does not depend on $\eta,\tau$ or $T$.
\end{lemma}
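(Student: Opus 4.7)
The plan is to bound $1 - w_{t+1}^*$ pointwise using the gap assumption, and then convert the resulting sum into a Gamma-function integral.

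First, since $k^*$ uniquely minimizes the cumulative loss on the range $t=\tau,\ldots,T$, it also maximizes the Hedge weight $w_{t+1}^k \propto e^{-\eta L_t^k}$, so $w_{t+1}^* = w_{t+1}^{k^*}$. Then I would write
\begin{equation*}
  1 - w_{t+1}^{k^*}
    = \frac{\sum_{k \neq k^*} e^{-\eta L_t^k}}{\sum_k e^{-\eta L_t^k}}
    \leq \frac{\sum_{k \neq k^*} e^{-\eta L_t^k}}{e^{-\eta L_t^{k^*}}}
    = \sum_{k \neq k^*} e^{-\eta (L_t^k - L_t^*)},
\end{equation*}
dropping the contribution of $k^*$ from the denominator. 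The gap hypothesis $L_t^k - L_t^* \geq \alpha t^\beta$ then immediately yields $1 - w_{t+1}^* \leq (K-1)\, e^{-\eta \alpha t^\beta}$ for every $t$ in the range.

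Next I would sum this bound over $t = \tau, \ldots, T$. Since the function $f(t) = e^{-\eta \alpha t^\beta}$ is decreasing in $t \geq 0$ (as $\eta,\alpha,\beta > 0$), the sum is dominated by the corresponding integral extended to all of $[0,\infty)$:
\begin{equation*}
  \sum_{t=\tau}^{T} (1 - w_{t+1}^*)
    \leq (K-1) \sum_{t=\tau}^T e^{-\eta \alpha t^\beta}
    \leq (K-1) \int_0^\infty e^{-\eta \alpha t^\beta} \dif t.
\end{equation*}
The substitution $u = \eta \alpha t^\beta$, so that $\dif t = (\eta\alpha)^{-1/\beta} \beta^{-1} u^{1/\beta - 1} \dif u$, turns the integral into
\begin{equation*}
  \int_0^\infty e^{-\eta \alpha t^\beta} \dif t
    = \frac{(\eta \alpha)^{-1/\beta}}{\beta} \int_0^\infty e^{-u} u^{1/\beta - 1} \dif u
    = (\eta\alpha)^{-1/\beta}\,\Gamma\!\left(1 + \tfrac{1}{\beta}\right),
\end{equation*}
using $\Gamma(1/\beta)/\beta = \Gamma(1 + 1/\beta)$. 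Combining these two displays gives the stated bound $C_K \eta^{-1/\beta}$ with the claimed constant.

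I do not anticipate any real obstacle: the argument is essentially a direct calculation. The only mildly delicate step is the sum-to-integral bound, which relies on monotonicity of $e^{-\eta\alpha t^\beta}$ and on the fact that the tail sum is dominated by the integral from $0$; since the bound on $C_K$ is not supposed to depend on $\tau$ or $T$, one must extend the integration range all the way down to $0$ to absorb the index shift cleanly.
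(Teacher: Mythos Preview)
Your proof is correct and follows essentially the same route as the paper's. The only cosmetic difference is that the paper first bounds $1/w_{t+1}^* \le 1 + (K-1)e^{-\alpha\eta t^\beta}$ and then converts this to a bound on $1 - w_{t+1}^*$, whereas you bound $1 - w_{t+1}^*$ directly by dropping all but the $k^*$ term from the denominator; both arrive at the same pointwise bound $(K-1)e^{-\eta\alpha t^\beta}$ and then use the identical sum-to-integral step and Gamma-function evaluation.
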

The lemma is proved in the Additional Material. Together with
Lemmas~\ref{lem:DeltaHoeffding} and \ref{lem:DeltaPosterior}, it gives
an upper bound on $\Delta_T(\eta)$, which may be used to bound the
number of segments started by AdaHedge. This leads to the following
result, whose proof is also delegated to the Additional Material.

Let $s(m)$ denote the round in which AdaHedge starts its $m$-th segment,
and let $L_r^k(m) = L_{s(m)+r-1}^k - L_{s(m)-1}^k$ denote the cumulative
loss of action $k$ in that segment.
\begin{lemma}\label{lem:easy}
  Let $\alpha > 0$ and $\beta > 1/2$ be constants, and let $C_K$ be as
  in Lemma~\ref{lem:deterministic}. Suppose there exists a segment $m^*
  \in \posints$ started by AdaHedge, such that $\tau := \floor{8 \ln(K)
  \phi^{(m^*-1)(2-1/\beta)} - 8(e-2)C_K+1} \geq 1$ and for some action
  $k^*$ the cumulative losses in segment $m^*$ diverge as
  \begin{equation}\label{eqn:easy}
    L_r^k(m^*) - L_r^{k^*}(m^*) \geq \alpha r^\beta
      \qquad \text{for all $r \geq \tau$ and $k \neq k^*$.}
  \end{equation}
  Then AdaHedge starts at most $m^*$ segments, and hence by
  Lemma~\ref{lem:RegretForMSegments} its regret is bounded by a
  constant:
  \begin{equation*}
    R_\text{AdaHedge}(T) = O(1).
  \end{equation*}
\end{lemma}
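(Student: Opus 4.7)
The plan is to show that within segment $m^*$, the cumulative mixability gap $\Delta$ (which AdaHedge restarts at zero when the segment begins) never reaches the budget $b_{m^*}=b(\eta_{m^*})$. By the AdaHedge update rule, this means segment $m^*{+}1$ is never started, so the algorithm produces at most $m^*$ segments in total. Applying Lemma~\ref{lem:RegretForMSegments} with $m\leq m^*$, a constant independent of $T$, then immediately yields $R_\text{AdaHedge}(T)=O(1)$.

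The key step is to bound $\Delta$ in segment $m^*$ by splitting the sum of per-round gaps at round $\tau$. For rounds $r\leq \tau$ I would apply Lemma~\ref{lem:DeltaHoeffding} to bound each $\delta_r$ by $\eta_{m^*}/8$, contributing at most $\tau\eta_{m^*}/8$. For rounds $r>\tau$ I would combine Lemma~\ref{lem:DeltaPosterior}, which gives $\delta_r\leq (e-2)\eta_{m^*}(1-w_r^*(\eta_{m^*}))$, with Lemma~\ref{lem:deterministic} applied to the losses internal to segment $m^*$: the divergence hypothesis~\eqref{eqn:easy} is exactly what is required by Lemma~\ref{lem:deterministic}, so the tail sum of $(1-w_r^*)$ is bounded by $C_K\eta_{m^*}^{-1/\beta}$. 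This makes the second piece contribute at most $(e-2)C_K\,\eta_{m^*}^{1-1/\beta}$, giving the overall bound $\Delta^{(m^*)}\leq \tau\eta_{m^*}/8 + (e-2)C_K\,\eta_{m^*}^{1-1/\beta}$.

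It then remains to verify that this bound is strictly less than $b(\eta_{m^*})=(1/\eta_{m^*}+1/(e-1))\ln(K)$. Substituting $\eta_{m^*}=\phi^{1-m^*}$ and multiplying through by $8/\eta_{m^*}$, the required inequality reduces, after further dividing by $\phi^{(m^*-1)/\beta}\geq 1$, to precisely the hypothesis $\tau\leq 8\ln(K)\phi^{(m^*-1)(2-1/\beta)} - 8(e-2)C_K + 1$ that defines $\tau$. The condition $\beta>1/2$ is what makes the exponent $2-1/\beta$ in the dominant term positive, so that the budget grows strictly faster in $m^*$ than the mixability-gap bound; this is what allows the given definition of $\tau$ to be non-vacuous.

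The main obstacle I expect is the bookkeeping of indices: Lemma~\ref{lem:DeltaPosterior} is stated in terms of $w_r^*$ while Lemma~\ref{lem:deterministic} bounds a sum over $w_{r+1}^*$, so care is needed with an off-by-one shift when gluing them together. A second delicate point is translating between the global notation of Lemma~\ref{lem:deterministic} and the within-segment losses $L_r^k(m^*)$: because Hedge is restarted with uniform weights at the start of segment $m^*$, the local losses play exactly the role of the $L_t^k$ in Lemma~\ref{lem:deterministic}, but this identification should be flagged explicitly before invoking the lemma.
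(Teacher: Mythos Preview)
Your proposal is correct and follows essentially the same route as the paper: split the within-segment cumulative mixability gap at round $\tau$, bound the early terms via Lemma~\ref{lem:DeltaHoeffding} and the late terms via Lemmas~\ref{lem:DeltaPosterior} and~\ref{lem:deterministic}, then verify the result stays below $\ln(K)/\eta_{m^*}<b(\eta_{m^*})$ using $\eta_{m^*}=\phi^{1-m^*}$. The paper makes one simplification you do not state explicitly: it first uses $\eta\le\eta^{1-1/\beta}$ (valid for $\eta\le1$) to merge the two pieces into $\tfrac{1}{8}\big(\tau-1+8(e-2)C_K\big)\eta^{1-1/\beta}$ before comparing with $\ln(K)/\eta$; your ``divide by $\phi^{(m^*-1)/\beta}$'' step is the same maneuver in disguise. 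One bookkeeping point to fix: by putting \emph{all} of rounds $1,\ldots,\tau$ into the Hoeffding part you obtain $\tau\eta/8$ rather than the paper's $(\tau-1)\eta/8$, and the final sufficient condition then becomes $\tau\le 8\ln(K)\phi^{(m^*-1)(2-1/\beta)}-8(e-2)C_K$, which is off by~$+1$ from the definition of $\tau$; either move the split to $\tau-1$ (as the paper does) or absorb the extra $\eta/8$ into the unused $\tfrac{1}{e-1}\ln(K)$ slack in the budget. Your two flagged obstacles (the $w_r^*$ versus $w_{r+1}^*$ shift and the within-segment reinterpretation of Lemma~\ref{lem:deterministic}) are exactly the delicate points, and the paper handles them just as you suggest.
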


In the simplistic example from the introduction, we may take $\alpha=b-a
- 2\epsilon$ and $\beta=1$, such that \eqref{eqn:easy} is satisfied for
any $\tau \geq 1$. Taking $m^*$ large enough to ensure that $\tau \geq
1$, we find that AdaHedge never starts more than $m^* =
1+\ceil{\log_\phi(\frac{e-2}{\alpha \ln(2)}+\frac{1}{8\ln(2)})}$ segments. Let us also
give an example of a probabilistic setting in which Lemma~\ref{lem:easy}
applies:
\begin{theorem}\label{thm:probabilistic}
  Let $\alpha > 0$ and $\delta \in (0, 1]$ be constants, and let $k^*$
  be a fixed action. Suppose the loss vectors $\vloss_t$ are independent
  random variables such that the expected differences in loss satisfy
  \begin{equation}\label{eqn:probcondition}
    \min_{k \neq k^*} \E[\loss_t^k - \loss_t^{k^*}] \geq 2\alpha
    \qquad \text{for all $t \in \posints$.}
  \end{equation}
  Then, with probability at least $1-\delta$, AdaHedge starts at most
  \begin{equation}\label{eq:our.choice.of.mstar}
    m^* = 1+\Big\lceil \log_\phi\Big(
          \frac{(K-1)(e-2)}{\alpha\ln(K)}
          + \frac{\ln\big(2K/(\alpha^2\delta)\big)}{4 \alpha^2 \ln(K)}
          + \frac{1}{8\ln(K)}
        \Big)\Big\rceil
  \end{equation}
  segments and consequently its regret is bounded by a constant:
  \begin{equation*}
    R_\text{AdaHedge}(T) = O\big(K + \log(1/\delta)\big).
  \end{equation*}
\end{theorem}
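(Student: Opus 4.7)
The plan is to reduce Theorem~\ref{thm:probabilistic} to Lemma~\ref{lem:easy} with $\beta = 1$ by using Hoeffding's inequality to turn the expected margin condition~\eqref{eqn:probcondition} into the deterministic divergence hypothesis~\eqref{eqn:easy}, with the right value of $\tau$, on an event of probability at least $1-\delta$. Specialising Lemma~\ref{lem:easy} to $\beta = 1$ gives $C_K = (K-1)/\alpha$ and $\tau = \lfloor 8\ln(K)\phi^{m^*-1} - 8(e-2)(K-1)/\alpha + 1 \rfloor$. A short computation using the explicit choice~\eqref{eq:our.choice.of.mstar} of $m^*$ (the logarithm is designed to cancel the two trailing terms) shows that $\tau \geq 2\alpha^{-2}\ln(2K/(\alpha^2\delta))$, which is the quantity we actually need downstream.

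Next, I would fix any segment $m \leq m^*$ and $k \neq k^*$ and look at the random walk $S_r = L_r^k(m) - L_r^{k^*}(m) = \sum_{t=s(m)}^{s(m)+r-1}(\loss_t^k - \loss_t^{k^*})$. Its increments are independent, bounded in $[-1,1]$, and have expectation at least $2\alpha$ by~\eqref{eqn:probcondition}, so Hoeffding's inequality gives
\begin{equation*}
  \Pr\big[S_r < \alpha r\big]
    \;\leq\; \Pr\big[S_r - \E S_r \leq -\alpha r\big]
    \;\leq\; \exp\!\big(-\alpha^2 r / 2\big).
\end{equation*}
A union bound over the $K-1$ alternatives and over $r \geq \tau$ yields a geometric series bounded by $(K-1)\exp(-\alpha^2 \tau/2)/(1-\exp(-\alpha^2/2))$. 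Using $1 - e^{-x} \geq x/2$ for $x \in [0, 1]$ (valid since $\alpha \leq 1$) and substituting the lower bound on $\tau$ obtained above, this tail probability is at most $\delta$, so with probability at least $1-\delta$ the divergence hypothesis~\eqref{eqn:easy} holds inside segment $m^*$.

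On this good event, Lemma~\ref{lem:easy} applies and AdaHedge starts at most $m^*$ segments. Plugging $m = m^*$ into Lemma~\ref{lem:RegretForMSegments} and observing that $\phi^{m^*}$, by the definition in~\eqref{eq:our.choice.of.mstar}, is of order $K/\alpha + \alpha^{-2}\log(1/\delta)/\ln(K)$ (treating $\alpha$ as a constant), we obtain $R_{\text{AdaHedge}}(T) = O(K + \log(1/\delta))$, as claimed.

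The only real obstacle is tracking the constants carefully so that $\tau$ is exactly large enough for the $(K-1)$-fold union bound plus the geometric-series factor to come out below $\delta$; that is precisely what the three summands inside the $\log_\phi$ in~\eqref{eq:our.choice.of.mstar} are calibrated for (one cancels $8(e-2)C_K$, one pays for the Hoeffding tail, and the final $1/(8\ln K)$ absorbs the rounding to an integer $\tau \geq 1$). Everything else is a direct invocation of the deterministic lemma.
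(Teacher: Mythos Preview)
Your proposal is correct and follows the same route as the paper: verify the divergence hypothesis of Lemma~\ref{lem:easy} with $\beta=1$ by Hoeffding's inequality plus a union bound over actions and over rounds $r\geq\tau$. The only cosmetic differences are that you apply Hoeffding directly to the pairwise differences $L_r^k(m^*)-L_r^{k^*}(m^*)$ (the paper controls each $L_r^k(m^*)$ separately and combines), and that you bound the tail sum by a geometric series rather than an integral.

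One small constant slip: with only the lower bound $\tau \geq 2\alpha^{-2}\ln\!\bigl(2K/(\alpha^2\delta)\bigr)$ that you state, your geometric-series estimate gives at most $\tfrac{2(K-1)}{K}\,\delta$, which exceeds $\delta$ for $K\geq 3$. The choice~\eqref{eq:our.choice.of.mstar} of $m^*$ in fact yields $\tau \geq 2\alpha^{-2}\ln\!\bigl(2K/(\alpha^2\delta)\bigr)+1$ (the ``$+1$'' in the definition of $\tau$ survives the floor because the $1/(8\ln K)$ term contributes an extra unit), and that additional $+1$ is exactly what closes the gap. Also, ``fix any segment $m\leq m^*$'' should read $m=m^*$: Lemma~\ref{lem:easy} only requires the divergence condition inside segment $m^*$.
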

This shows that the probabilistic setting of the theorem is much easier
than the worst case, for which only a bound on the regret of order
$O(\sqrt{T\ln(K)})$ is possible, and that AdaHedge automatically adapts
to this easier setting. The proof of Theorem~\ref{thm:probabilistic} is
in the Additional Material. It verifies that the conditions of
Lemma~\ref{lem:easy} hold with sufficient probability for
$\beta=1$, and $\alpha$ and $m^*$ as in the theorem.

\section{Experiments}\label{sec:experiments}

We compare AdaHedge to other hedging algorithms in two experiments
involving simulated losses.

\subsection{Hedging Algorithms}

\textit{Follow-the-Leader.} This algorithm is included because it is
simple and very effective if the losses are not antagonistic, although
as mentioned in the introduction its regret is linear in the worst case.

\textit{Hedge with fixed learning rate.} We also include Hedge with a
fixed learning rate
\begin{equation}\label{eq:eta}
  \eta=\sqrt{2\ln(K)/L^*_T},
\end{equation}
which achieves the regret bound
$\sqrt{2\ln(K)L^*_T}+\ln(K)$\footnote{Cesa-Bianchi and Lugosi use
$\eta=\ln(1+\sqrt{2\ln K/L^*_T})$ \cite{CesaBianchiLugosi2006}, but the
same bound can be obtained for the simplified expression we use.}. Since
$\eta$ is a function of $L^*_T$, the agent needs to use post-hoc
knowledge to use this strategy.

\textit{Hedge with doubling trick.} The common way to apply the doubling
trick to $L^*_T$ is to set a budget on $L^*_T$ and multiply it by some
constant $\phi'$ at the start of each new segment, after which $\eta$ is
optimized for the new budget \cite{CesaBianchiLugosi2006,CFHHSW1997}.
Instead, we proceed the other way around and with each new segment first
divide $\eta$ by $\phi=2$ and then calculate the new budget such
that~\eqref{eq:eta} holds when $\Delta_t(\eta)$ reaches the budget. This
way we keep the same invariant ($\eta$ is never larger than the
right-hand side of \eqref{eq:eta}, with equality when the budget is
depleted), and the frequency of doubling remains logarithmic in $L^*_T$
with a constant determined by $\phi$, so both approaches are equally
valid. However, controlling the sequence of values of $\eta$ allows for
easier comparison to AdaHedge.

\textit{AdaHedge} (Algorithm~\ref{alg:adahedge}). Like in the previous
algorithm, we set $\phi=2$. Because of how we set up the doubling, both
algorithms now use the same sequence of learning rates
$1,1/2,1/4,\dots$; the only difference is \emph{when} they decide to
start a new segment.

\textit{Hedge with variable learning rate.} Rather than using the
doubling trick, this algorithm, described in
\cite{AuerCesaBianchiGentile2002}, changes the learning rate each
round as a function of $L^*_t$. This way there is no need to relearn
the weights of the actions in each block, which leads to a better
worst-case bound and potentially better performance in practice. Its
behaviour on easy problems, as we are currently interested in, has not
been studied.

\subsection{Generating the Losses}

In both experiments we choose losses in $\{0,1\}$. The experiments are
set up as follows.

\textit{I.I.D. losses.} In the first experiment, all $T=10\,000$
losses for all $K=4$ actions are independent, with distribution
depending only on the action: the probabilities of incurring loss $1$
are $0.35$, $0.4$, $0.45$ and $0.5$, respectively. The results are then
averaged over $50$ repetitions of the experiment.

\textit{Correlated losses.} In the second experiment, the $T=10\,000$
loss vectors are still independent, but no longer identically
distributed. In addition there are dependencies \emph{within} the loss
vectors $\vloss_t$, between the losses for the $K=2$ available actions: each round is \emph{hard} with probability $0.3$, and
\emph{easy} otherwise. If round $t$ is hard, then action $1$ yields loss
$1$ with probability $1-0.01/t$ and action $2$ yields loss $1$ with
probability $1-0.02/t$. If the round is easy, then the probabilities are
flipped and the actions yield loss $0$ with the same probabilities. The results are averaged over $200$ repetitions.

\begin{figure}
\subfigure[I.I.D. losses]{\includegraphics[trim=30mm
    10mm 30mm 0mm,clip,width=0.49\textwidth\label{fig:simlinear}]{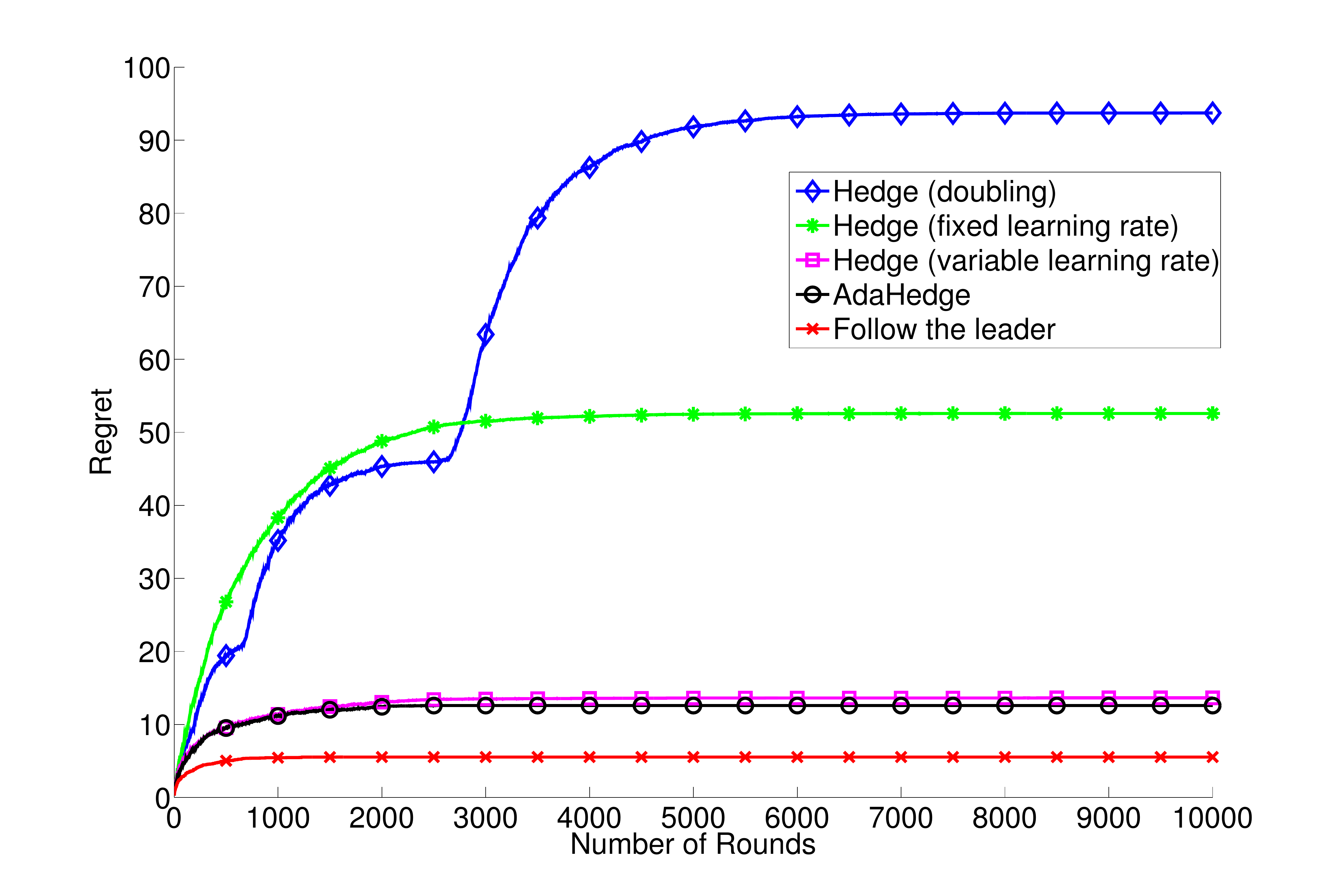}}
\subfigure[Correlated losses]{\includegraphics[trim=30mm 10mm 30mm 0mm,clip,width=0.49\textwidth\label{fig:simcorr}]{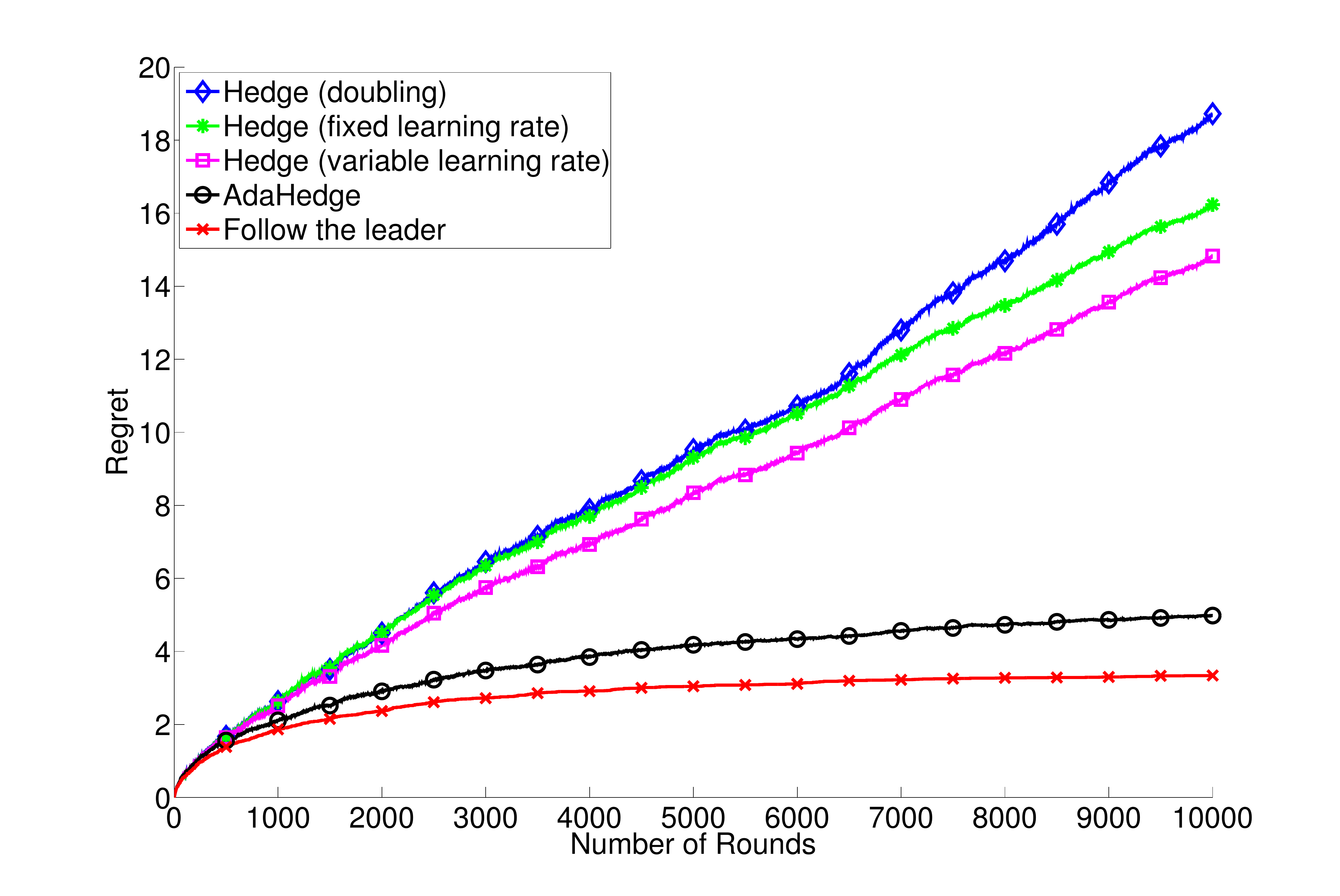}}
\caption{Simulation results\label{fig:simulations}}
\end{figure}

\subsection{Discussion and Results}

Figure~\ref{fig:simulations} shows the results of the experiments
above. We plot the regret (averaged over repetitions of the
experiment) as a function of the number of rounds, for each of the
considered algorithms.

\paragraph{I.I.D. Losses.}  In the first considered regime, the
accumulated losses for each action diverge linearly with high
probability, so that the regret of Follow-the-Leader is bounded.
Based on Theorem~\ref{thm:probabilistic} we expect AdaHedge to incur
bounded regret also; this is confirmed in Figure~\ref{fig:simlinear}.
Hedge with a fixed learning rate shows much larger regret. This
happens because the learning rate, while it optimizes the worst-case
bound, is much too small for this easy regime. In fact, if we would
include more rounds, the learning rate would be set to an even smaller
value, clearly showing the need to determine the learning rate
adaptively. The doubling trick provides one way to adapt the
learning rate; indeed, we observe that the regret of Hedge with the
doubling trick is initially smaller than the regret of Hedge with fixed
learning rate. However, unlike AdaHedge, the algorithm never detects
that its current value of $\eta$ is working well; instead it keeps
exhausting its budget, which leads to a sequence of clearly visible
bumps in its regret. Finally, it appears that the Hedge algorithm with
variable learning rate also achieves bounded regret. This is surprising,
as the existing theory for this algorithm only considers its worst-case
behaviour, and the algorithm was not designed to do specifically well in
easy regimes.

\paragraph{Correlated Losses.} In the second simulation we investigate
the case where the mean cumulative loss of two actions is extremely
close --- within $O(\log t)$ of one another.
If the losses of the actions where independent, such a small difference
would be dwarfed by random fluctuations in the cumulative losses, which
would be of order $O(\sqrt{t})$. Thus the two actions can only be
distinguished because we have made their losses dependent. Depending on
the application, this may actually be a more natural scenario than
complete independence as in the first simulation; for example, we can
think of the losses as mistakes of two binary classifiers, say, two
naive Bayes classifiers with different smoothing parameters. In such a
scenario, losses will be dependent, and the difference in cumulative
loss will be much smaller than $O(\sqrt{t})$. In the previous
experiment, the posterior weights of the actions converged relatively
quickly for a large range of learning rates, so that the exact value of
the learning rate was most important at the start (e.g., from $3000$
rounds onward Hedge with fixed learning rate does not incur much
additional regret any more). In this second setting, using a high
learning rate remains important throughout. This explains why in this
case Hedge with variable learning rate can no longer keep up with
Follow-the-Leader. The results for AdaHedge are also interesting:
although Theorem~\ref{thm:probabilistic} does not apply in this case, we
may still hope that $\Delta_t(\eta)$ grows slowly enough that the
algorithm does not start too many segments. This turns out to be the
case: over the $200$ repetitions of the experiment, AdaHedge started
only $2.265$ segments on average, which explains its excellent
performance in this simulation.

\section{Proof of Lemma~\ref{lem:DeltaPosterior}}\label{sec:proofs}
 
Our main technical tool is Lemma~\ref{lem:DeltaPosterior}. Its proof
requires the following intermediate result:
\begin{lemma}\label{lem:Bayesconcave}
  For any $\eta > 0$ and any time $t$, the function
  $f(\vloss_t) = \ln\Big(\w_t\dot e^{-\eta \vloss_t}\Big)$ is convex.
\end{lemma}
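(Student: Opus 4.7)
The plan is to observe that when we vary $\vloss_t$, the weight vector $\w_t$ stays fixed, since $\w_t$ is a deterministic function of the earlier loss vectors $\vloss_1,\ldots,\vloss_{t-1}$ only. Hence $f$ is really the function $x \mapsto \ln\bigl(\sum_{k} w_t^k e^{-\eta x^k}\bigr)$ on $\reals^K$, with $\w_t$ a fixed probability vector and $\eta>0$ a fixed scalar. Two standard routes present themselves.

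The shortest route is to recognise $f$ as a log-sum-exp composed with an affine map: if we set $y^k(x) = -\eta x^k + \ln w_t^k$ (dropping coordinates where $w_t^k=0$), then $f(x) = \ln \sum_k e^{y^k(x)}$, and the log-sum-exp function is well-known to be convex on $\reals^K$. Since convexity is preserved under composition with an affine map, $f$ is convex. This needs only a one-line justification of log-sum-exp convexity, which in turn comes from Hölder's inequality or the Hessian calculation below.

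Alternatively, and arguably more self-contained, I would verify convexity by computing the Hessian directly. Writing $S(x) = \sum_k w_t^k e^{-\eta x^k}$ and the normalised quantities $p^k(x) = w_t^k e^{-\eta x^k}/S(x)$ (so $\sum_k p^k = 1$ and $p^k \geq 0$), a short computation gives
\begin{equation*}
\frac{\partial f}{\partial x^k} = -\eta\, p^k,\qquad
\frac{\partial^2 f}{\partial x^j\, \partial x^k} = \eta^2\bigl(\delta_{jk}\, p^k - p^j p^k\bigr).
\end{equation*}
For any $v \in \reals^K$, the quadratic form evaluates to
\begin{equation*}
v^{\mathrm{T}} H v
 = \eta^2 \Bigl(\sum_k p^k (v^k)^2 - \Bigl(\sum_k p^k v^k\Bigr)^{\!2}\Bigr)
 = \eta^2 \cdot \mathrm{Var}_{p}(v) \;\geq\; 0,
\end{equation*}
where the variance is taken under the probability vector $p(x)$. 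Thus the Hessian is positive semidefinite everywhere, which establishes convexity.

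There is no real obstacle: the only subtlety worth flagging is the one already mentioned, namely that $\w_t$ is constant with respect to $\vloss_t$, so differentiating through the normaliser produces exactly the variance expression above and nothing else. I would state this observation explicitly at the start of the proof to avoid any ambiguity about which variable is being varied.
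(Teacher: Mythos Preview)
Your proposal is correct. Route~1 is essentially the paper's own argument: the paper observes that each $e^{-\eta \ell^k}$ is log-convex (indeed log-linear) and then proves, via H\"older's inequality, that a convex combination of log-convex functions is log-convex; this is exactly the standard proof of log-sum-exp convexity rephrased. Your Route~2, computing the Hessian and recognising the quadratic form as $\eta^2\,\mathrm{Var}_p(v)\ge 0$, is a genuinely different and more self-contained argument: it avoids H\"older entirely and makes the connection to the posterior variance explicit, which is arguably more illuminating given that the lemma is used precisely to control how the mixability gap depends on the posterior. Either route is fine; your explicit remark that $\w_t$ does not depend on $\vloss_t$ is a useful clarification that the paper leaves implicit.
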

This may be proved by observing that $f$ is the convex conjugate of the
Kullback-Leibler divergence. An alternative proof based on log-convexity
is provided in the Additional Material.

\begin{proof}[Proof of Lemma~\ref{lem:DeltaPosterior}]
  We need to bound $\delta_t = \w_t(\eta)\dot \vloss_t +
  \tfrac{1}{\eta} \ln(\w_t(\eta) \dot e^{-\eta \vloss_t})$, which is a
  convex function of $\vloss_t$ by Lemma~\ref{lem:Bayesconcave}. As a
  consequence, its maximum is achieved when $\vloss_t$ lies on the
  boundary of its domain, such that the losses $\loss_t^k$ are either
  $0$ or $1$ for all $k$, and in the remainder of the proof we will
  assume (without loss of generality) that this is the case. Now let
  $\alpha_t=\w_t \dot \vloss_t$ be the posterior probability of the
  actions with loss $1$. Then
  \[
  \delta_t=\alpha_t+\frac{1}{\eta}\ln\left((1-\alpha_t)+\alpha_t
    e^{-\eta}\right)=\alpha_t+\frac{1}{\eta}\ln\left(1+\alpha_t(e^{-\eta}-1)\right).
  \]
  Using $\ln x\le x-1$ and $e^{-\eta}\le 1-\eta+\half\eta^2$, we get
  $\delta_t\le \half\alpha_t\eta$, which is tight for $\alpha_t$ near
  $0$. For $\alpha_t$ near $1$, rewrite
  \[
  \delta_t=\alpha_t-1+\frac{1}{\eta}\ln(e^\eta(1-\alpha_t)+\alpha_t)
  \]
  and use $\ln x\le x-1$ and $e^\eta\le 1+\eta+(e-2)\eta^2$ for
  $\eta\le1$ to obtain $\delta_t\le(e-2)(1-\alpha_t)\eta$. Combining the
  bounds, we find
  \[
  \delta_t\le(e-2)\eta\min\{\alpha_t,1-\alpha_t\}.
  \]
  Now, let $k^*$ be an action such that $w_t^* = w_t^{k^*}$. Then
  $\loss^{k^*}_t=0$ implies $\alpha_t\le 1-w_t^*$. On the other hand, if
  $\loss^{k^*}_t=1$, then $\alpha_t\ge w_t^*$ so $1-\alpha_t\le
  1-w_t^*$. Hence, in both cases
  $
  \min\{\alpha_t,1-\alpha_t\}\le 1-w_t^*,
  $
  which completes the proof.
\end{proof}

\section{Conclusion and Future Work}\label{sec:conclusion}

We have presented a new algorithm, AdaHedge, that adapts to the
difficulty of the DTOL learning problem. This difficulty was
characterised in terms of convergence of the posterior probability of
the best action. For hard instances of DTOL, for which the posterior
does not converge, it was shown that the regret of AdaHedge is of the
optimal order $O(\sqrt{L^*_T \ln(K)})$; for easy instances, for which
the posterior converges sufficiently fast, the regret was bounded by a
constant. This behaviour was confirmed in a simulation study, where the
algorithm outperformed existing versions of Hedge.

A surprising observation in the experiments was the good performance of
Hedge with a variable learning rate on some easy instances. It would be
interesting to obtain matching theoretical guarantees, like those
presented here for AdaHedge. A starting point might be to consider how
fast the posterior probability of the best action converges to one, and
plug that into Lemma~\ref{lem:DeltaPosterior}.


\subsubsection*{Acknowledgments}

The authors would like to thank Wojciech Kot{\l}owski for useful
discussions. This work was supported in part by the IST Programme of the
European Community, under the PASCAL2 Network of Excellence,
IST-2007-216886, and by NWO Rubicon grant 680-50-1010.
This publication only reflects the authors' views.


\bibliography{../adahedge}

\clearpage
\appendix
\section{Additional Material}

\begin{proof}[Proof of Lemma~\ref{lem:Lstarbound}]
  Lemma~A.3 in \cite{CesaBianchiLugosi2006} gives the bound $\ln
  \E[e^{s X}] \leq (e^s-1)\E[X]$ for any random variable $X$ taking
  values in $[0,1]$ and any $s \in \reals$. Defining $X = \loss_t^k$
  with distribution $\w_t$, setting $s = -\eta$ and dividing by the
  negative factor $(e^{-\eta}-1)$, we obtain
  \begin{equation*}
    \frac{1}{e^{-\eta}-1} \ln(\w_t \dot e^{-\eta \vloss_t})
      \geq \w_t\dot \vloss_t.
  \end{equation*}
  It follows that
  \begin{equation*}
    \Delta_T(\eta)
      = \sum_{t=1}^T \Big(\w_t\dot \vloss_t +
          \tfrac{1}{\eta} \ln(\w_t \dot e^{-\eta
          \vloss_t})\Big)
      \leq -f(\eta)
          \sum_{t=1}^T \ln(\w_t\dot e^{-\eta \vloss_t})
      = -f(\eta) \ln B_T,
  \end{equation*}
  where $f(\eta) = 1/(1-e^{-\eta})-1/\eta$ is a nonnegative, increasing
  function and $B_T$ is the marginal likelihood (see
  \eqref{eqn:marginallikelihood} and \eqref{eqn:chainrule}). The lemma
  now follows by bounding $B_T$ from below by $\tfrac{1}{K} e^{-\eta
  L_T^*}$ and using $f(\eta) \leq f(1) = 1/(e-1)$.
\end{proof}

\begin{proof}[Proof of Theorem~\ref{thm:Doubling}]
  In order to apply Lemma~\ref{lem:RegretForMSegments} we will need to
  bound the number of segments $m$. To this end, let $L^*(i)$ denote the
  cumulative loss of the best action on the $i$-th segment. That is, if
  the $i$-th segment spans rounds $t_1,\ldots,t_2$, then $L^*(i) =
  \min_k \sum_{t=t_1}^{t_2} \loss_t^k$. If $m=1$, then the theorem is
  true by Lemma~\ref{lem:RegretForMSegments}, so suppose that $m \geq
  2$. Then we know that the budgets for the first $m-1$ segments have
  been depleted, so that for these segments \eqref{eqn:etalowerbound}
  applies, giving:
  \begin{equation*}
    \frac{\phi^{2m-2}-1}{\phi^2-1}
      = \sum_{i=1}^{m-1} \phi^{2i-2}
      = \sum_{i=1}^{m-1} \frac{1}{\eta_i^2}
      \leq \sum_{i=1}^{m-1} \frac{L^*(i)}{(e-1)\ln(K)}
      \leq \frac{L^*_T}{(e-1)\ln(K)}.
  \end{equation*}
  Solving for $m$, we find
  \begin{equation*}
    m \leq \half \log_\phi
    \Big(\frac{(\phi^2-1)L^*_T}{(e-1)\ln(K)}+1\Big)+1.
  \end{equation*}
  Substitution in Lemma~\ref{lem:RegretForMSegments} gives
  \begin{align*}
    R_\text{AdaHedge}(T)
      &< 2\ln(K)\Big(\frac{\phi^m-1}{\phi-1}\Big) 
        + m\Big(\tfrac{1}{e-1}\ln(K)+\tfrac{1}{8}\Big)\\
      &= \frac{2\ln(K)}{\phi-1}\Big(
          \phi\sqrt{\frac{(\phi^2-1)L^*_T}{(e-1)\ln(K)}+1} -1
      \Big) 
        + O\big(\ln(L^*_T+2)\ln(K)\big)
\\
      &= \frac{2\ln(K)}{\phi-1}
          \phi\sqrt{\frac{(\phi^2-1)L^*_T}{(e-1)\ln(K)}}
        + O\big(\ln(L^*_T+2)\ln(K)\big),
  \end{align*}
where the last step uses $\sqrt{a+b}\leq \sqrt{a}+\sqrt{b}$. Rearranging yields the theorem.
\end{proof}

\begin{proof}[Proof of Lemma~\ref{lem:deterministic}]
  For $t$ between $\tau$ and $T$ we have
  \begin{equation*}
    1/w^*_{t+1}(\eta)
      = \sum_{k=1}^K e^{-\eta(L^k_{t}-L^*_{t})}
      \le 1+(K-1)e^{-\alpha \eta t^\beta},
  \end{equation*}
  which implies
  \begin{align*}
    \sum_{t=\tau}^T\big(1-w_{t+1}^*(\eta)\big)
      &\le\sum_{t=\tau}^T
          \Big(1-\frac{1}{1+(K-1)e^{-\alpha\eta t^\beta}}\Big)\\
      &=(K-1)\sum_{t=\tau}^T
          1/\big(e^{\alpha\eta t^\beta}+K-1\big)
      \le (K-1)\sum_{t=\tau}^\infty e^{-\alpha \eta t^\beta}\\
      &\leq (K-1) \int_0^\infty e^{-\alpha \eta t^\beta}\dif t
      = (K-1) (\alpha \eta)^{-1/\beta} \Gamma(1+1/\beta),
  \end{align*}
  where the integral can be evaluated using the variable substitution $u
  = \alpha\eta t^\beta$ and the fact that $z \Gamma(z) = \Gamma(1+z)$.
\end{proof}

\begin{proof}[Proof of Lemma~\ref{lem:easy}]
  Let
  \begin{equation*}
    \Delta_T(m^*,\eta) =
      \sum_{t=s(m^*)}^{T}
        \Big(\w_t(\eta)\dot \vloss_t +
            \tfrac{1}{\eta} \ln(\w_t(\eta) \dot e^{-\eta \vloss_t})\Big)
  \end{equation*}
  be a version of $\Delta_T(\eta)$ measured only on the rounds since the
  start of the $m^*$-th segment. 
%
%
We bound the first $\tau-1$ terms in
  this sum using Lemma~\ref{lem:DeltaHoeffding} and the remaining terms
  using Lemmas~\ref{lem:DeltaPosterior} and
  \ref{lem:deterministic}\footnote{%
  Lemma~\ref{lem:deterministic} applies to a single run of the Hedge
  algorithm. As AdaHedge restarts Hedge at the start of every new
  segment, the times in Lemma~\ref{lem:deterministic} should be
  interpreted relative to the current segment, $m^*$.}, which gives
  \begin{align*}
    \Delta_T(m^*,\eta)
      &= \sum_{t=s(m^*)}^{T} \Big(\delta_t(\eta)\Big)\\
      &\leq \frac{(\tau-1) \eta}{8} + (e-2)C_K\eta^{1-1/\beta}
      \leq \frac{1}{8}\big(\tau -1 + 8(e-2)C_K\big)\eta^{1-1/\beta}
  \end{align*}
  for any $\eta \leq 1$.

  We will argue that the budget in segment $m^*$ is never depleted:
  $\Delta_T(m^*,\eta_{m^*}) < b(\eta_{m^*}) =
  \ln(K)/\eta_{m^*}+\ln(K)/(e-1)$, for which it is sufficient to show
  that
  \begin{align*}
    \frac{1}{8}\big(\tau - 1 + 8(e-2)C_K\big)\eta_{m^*}^{1-1/\beta}
      &\le \frac{\ln(K)}{\eta_{m^*}}\\
    \tau &\le 8\ln(K) \eta_{m^*}^{1/\beta-2} - 8(e-2)C_K + 1\\
      &= 8\ln(K) \phi^{(m^*-1)(2-1/\beta)} - 8(e-2)C_K + 1,
  \end{align*}
  which is true by definition of $\tau$.
\end{proof}

\begin{proof}[Proof of Theorem~\ref{thm:probabilistic}]
  We will show that the conditions of Lemma~\ref{lem:easy} (with the
  same $\alpha$, and $\beta = 1$) are satisfied with probability at
  least $1-\delta$. For $r = \tau,\tau+1,\ldots$, let $A_r^k$ denote the
  event that $L_r^k(m^*) - \E[L_r^k(m^*)] \geq -\tfrac{\alpha}{2} r$ for
  $k\neq k^*$, let $B_r$ denote the event that $L_r^{k^*}(m^*) -
  \E[L_r^{k^*}(m^*)] \leq \tfrac{\alpha}{2} r$, and let $D_r = B_r
  \intersection \Intersection_{k\neq k^*} A_r^k$ denote the intersection
  of these events. Using \eqref{eqn:probcondition}, it can be seen that
  $L_r^k-L_r^{k^*} \geq \alpha r$ on $D_r$, as required by
  \eqref{eqn:easy}. Hence we need to show that the probability of
  $\Intersection_{r\geq \tau} D_r$ is at least $1-\delta$, or
  equivalently that the probability of the complementary event
  $\Union_{r\geq \tau} \comp{D}_r$ is at most $\delta$.

  By Hoeffding's inequality \cite{CesaBianchiLugosi2006} the
  probabilities of the complementary events $\comp{A}_r^k$ and
  $\comp{B}_r$ may each be bounded by $\exp(-\half\alpha^2r)$. And hence by
  the union bound the probability of $\comp{D}_r$ is bounded by $K
  \exp(-\half\alpha^2r)$. Again by the union bound it follows that
  \begin{equation*}
    \Pr\Big(\Union_{r\geq \tau} \comp{D}_r\Big)
      \leq \sum_{r=\tau}^\infty K e^{-\half \alpha^2r}
      \leq K \int_{\tau-1}^\infty e^{-\half \alpha^2r}\dif r\\
      = \tfrac{2K}{\alpha^2} e^{-\half \alpha^2 (\tau-1)}.
  \end{equation*}
  We require this probability to be bounded by $\delta$, for which it is
  sufficient that
  \begin{equation}\label{eqn:lowertau}
    \tau \geq \frac{2}{\alpha^2}
      \ln\Big(\frac{2K}{\alpha^2 \delta}\Big) + 1.
  \end{equation}
  By definition of $\tau$, this is implied by
  \begin{align*}
   8 \ln(K) \phi^{m^*-1} - \tfrac{8(K-1)(e-2)}{\alpha}
    &\geq \frac{2}{\alpha^2}
      \ln\Big(\frac{2K}{\alpha^2 \delta}\Big) + 1,
  \end{align*}
  which holds for our choice \eqref{eq:our.choice.of.mstar} of $m^*$. To show that AdaHedge starts at
  most $m^*$ segments, it remains to verify the other condition of
  Lemma~\ref{lem:easy}, which is that $\tau \geq 1$. This follows from
  \eqref{eqn:lowertau} upon observing that \eqref{eqn:probcondition} implies $\alpha \le \half$ so that
  $\frac{2}{\alpha^2}\ln\Big(\frac{2K}{\alpha^2 \delta}\Big) \geq 0$.

  Finally, the bound on the regret is obtained by plugging $m^*$ into
  Lemma~\ref{lem:RegretForMSegments}.
\end{proof}

\begin{proof}[Proof of Lemma~\ref{lem:Bayesconcave}]
  Within this proof, let us drop the subscript $t$ from $\vloss_t$ and
  $\w_t$, and define the function $f_k(\vloss) = e^{-\eta \loss^k}$ for
  every action $k$. Let $\vloss_0$ and $\vloss_0$ be arbitrary loss
  vectors, and let $\lambda \in (0,1)$ also be arbitrary. Then it is
  sufficient to show that
  \begin{equation}\label{eqn:expectedlog-convex}
    \ln \E_{k \sim \w}[f_k(\vloss_\lambda)]
      \leq (1-\lambda) \ln \E_{k \sim \w}[
        f_k(\vloss_0)] + \lambda \ln \E_{k \sim \w}[f_k(\vloss_1)],
  \end{equation}
  where $\vloss_\lambda = (1-\lambda)\vloss_0+\lambda \vloss_1$. Towards
  this end, we start by observing that $f_k$ is log-convex:
  \begin{equation}\label{eqn:log-convex}
    \ln f_k(\vloss_\lambda)
      \leq (1-\lambda)\ln f_k(\vloss_0)
           + \lambda \ln f_k(\vloss_1).
  \end{equation}
  Inequality~\ref{eqn:expectedlog-convex} now follows from the general
  fact that a convex combination of log-convex functions is itself
  log-convex, which we will proceed to prove: using first
  \eqref{eqn:log-convex} and then applying H\"older's inequality (see
  e.g.\ \cite{Shiryaev1996}) one obtains
  \begin{equation*}
    \E_{k \sim \w}[f_k(\vloss_\lambda)]
      \leq \E_{k \sim \w}\big[
        f_k(\vloss_0)^{1-\lambda}f_k(\vloss_1)^\lambda\big]
      \leq \E_{k \sim \w}[
        f_k(\vloss_0)]^{1-\lambda}\E_{k \sim \w}[f_k(\vloss_1)]^\lambda,
  \end{equation*}
  from which \eqref{eqn:expectedlog-convex} follows by taking
  natural logarithms.
\end{proof}

\end{document}